
\documentclass[12pt,a4paper,final]{iopart}

%Uncomment next line if AMS fonts required
\usepackage{iopams}
\usepackage[dvipdfmx]{graphicx}
\usepackage[breaklinks=true,colorlinks=true,linkcolor=blue,urlcolor=blue,citecolor=blue]{hyperref}

% BEGIN ADDED BY RB
% \usepackage[dvipdfm]{graphicx}
% \usepackage{bmpsize}
\usepackage{setstack}
\usepackage{amsthm,bm,amsfonts}
\newcommand{\E}{\mathbb{E}}

\newcommand{\bx}{{\bf x}}
\newcommand{\bz}{{\bf z}}
\newtheorem{theorem}{Theorem}

\usepackage{lipsum}
\usepackage{subfigure}

% Display footnote marks as superscript (i.e., the sane way).
% This is very hacky and makes all footnotes use the same symbol.
% It only works because we happen to have only a single footnote.
\makeatletter
\renewcommand{\@makefnmark}{$^\ddagger$}
\makeatother

% \newcommand\blfootnote[1]{%
%   \begingroup
%   \renewcommand\thefootnote{}\footnote{#1}%
%   \addtocounter{footnote}{-1}%
%   \endgroup
% }

% ALGORITHMS
\usepackage[algoruled]{algorithm2e}
\usepackage{listings}
\usepackage{fancyvrb}
\fvset{fontsize=\normalsize}
\usepackage{booktabs}
% END ADDED BY RB

\begin{document}

\title{Tightening Bounds for Variational Inference by Revisiting Perturbation Theory}

\author{Robert Bamler\footnote{Equal Contribution}}
\address{University of California, Irvine}
\ead{rbamler@uci.edu}

\author{Cheng Zhang$^\ddagger$}
\address{Microsoft Research, Cambridge}
\ead{Cheng.Zhang@microsoft.com}

\author{Manfred Opper }
\address{Technical University of Berlin}
\ead{manfred.opper@tu-berlin.de}

\author{Stephan Mandt$^\ddagger$}
\address{University of California, Irvine}
\ead{mandt@uci.edu}

\begin{abstract}
Variational inference has become one of the most widely used methods in latent variable modeling. In its basic form, variational inference employs a fully factorized variational distribution and minimizes its KL divergence to the posterior. As the minimization can only be carried out approximately, this approximation induces a bias. In this paper, we revisit perturbation theory as a powerful way of improving the variational approximation. Perturbation theory relies on a form of Taylor expansion of the log marginal likelihood, vaguely in terms of the log ratio of the true posterior and its variational approximation. While first order terms give the classical variational bound, higher-order terms yield corrections that tighten it.
However, traditional perturbation theory does not provide a lower bound, making it inapt for stochastic optimization. In this paper, we present a similar yet alternative way of deriving corrections to the ELBO that resemble perturbation theory, but that result in a valid bound.
We show in experiments on Gaussian Processes and Variational Autoencoders that the new bounds are more mass covering, and that the resulting posterior covariances are closer to the true posterior and lead to higher likelihoods on held-out data.
\end{abstract}

%Uncomment for PACS numbers title message
% \pacs{00.00, 20.00, 42.10}
% Keywords required only for MST, PB, PMB, PM, JOA, JOB?
\vspace{2pc}
\noindent{\it Keywords}: Black Box Variational Inference, Perturbation Theory
% Uncomment for Submitted to journal title message
% \submitto{\JPA}
% Comment out if separate title page not required
%\maketitle

\section{Introduction}

Bayesian inference is the task of reasoning about random variables that can only be measured indirectly.
Given a probabilistic model and a data set of observations, Bayesian inference seeks the posterior probability distribution over the remaining, i.e., unobserved or `latent' variables.
The computational bottleneck is calculating the marginal likelihood, the probability of the data with latent variables being integrated out.
The marginal likelihood is also an important quantity for model selection.
It is the objective function of the expectation-maximization (EM) algorithm~\cite{dempster1977maximum}, which has seen a recent resurgence with the popularity of variational autoencoders (VAEs)~\cite{kingma2013auto}.

Variational inference (VI)~\cite{jordan1999introduction,hoffman2013stochastic,zhang2018advances} scales up approximate Bayesian inference to large data sets by framing inference as an optimization problem. VI derives and maximizes a lower bound to the log marginal likelihood, termed 'evidence', and uses this so-called 'evidence lower bound' (ELBO)  as a proxy for the true evidence.
In its original formulation, VI was limited to a restricted class of so-called conditionally conjugate models~\cite{hoffman2013stochastic}, for which a closed-form expression for the ELBO can be derived by solving known integrals. More recently, black box variational inference (BBVI) approaches have become more popular~\cite{ranganath2014black,rezende2014stochastic}, which lift this restriction by approximating the gradient of the ELBO by Monte Carlo samples. 
BBVI also enables the optimization of alternative bounds that do not possess closed-form integrals. 
With Monte Carlo gradient estimation, the focus shifts from tractability of these integrals to the variance of the Monte Carlo gradient estimators, asking for bounds with low-variance gradients.

In this article, we propose a family of new lower bounds that are tighter than the standard ELBO while having lower variance than previous alternatives and admitting unbiased Monte Carlo estimation of the gradients of the bound.
We derive these new bounds by introducing ideas from so-called variational perturbation theory into BBVI.

Variational perturbation theory provides an alternative to VI for approximating the evidence~\cite{paquet2009perturbation,opper2013perturbative,opper2015perturbation,schwartz2008ideas}.
It is based on a Taylor expansion of the evidence around the variational distribution, i.e., a parameterized approximate posterior distribution.
The lowest order of the Taylor expansion recovers the standard ELBO, and higher order terms correct for the mismatch the between variational distribution and true posterior.

Variational perturbation theory based on so-called cumulant expansions~\cite{opper2015perturbation} requires a fair amount of manual derivations and puts strong constraints on the tractability of certain integrals. 
A cumulant expansion also generally does not result in a lower bound, making it impossible to minimize the approximation error with stochastic gradient descent.
In this article, we propose a new variant of perturbation theory that addresses these issues.
Our proposed perturbative expansion leads to a family of lower bounds on the marginal likelihood that can be optimized with the same black box techniques as the standard ELBO.
The proposed bounds ${\cal L}^{(K)}$ are enumerated by an odd integer~$K$, the order of the perturbative expansion, and are given by
\begin{equation} \label{eq:perturbative-bounds}
    \mathcal L^{(K)}(\lambda,V_0)
    = e^{-V_0} \sum_{k=0}^K \frac{1}{k!} \E_{\mathbf z\sim q_\lambda} \!\left[
            \left(V_0 + \log p(\bx,\bz) - \log q_\lambda(\bz)\right)^k
        \right]
    \leq p(\bx).
%    \quad\text{(for odd $K$)}.
\end{equation}
Here, $p(\bx,\bz)$ is the joint distribution of the probabilistic model with observed variables~$\bx$ and latent variables~$\bz$, $q_\lambda(\bz)$ is the variational distribution with variational parameters~$\lambda$, and $p(\bx)$ is the marginal likelihood.
Further, the reference energy $V_0\in\mathbb R$ is an additional free parameter over which we optimize jointly with~$\lambda$.

The proposed bounds in Eq.~\ref{eq:perturbative-bounds} generalize the standard ELBO, which one obtains for~$K=1$.
Higher order terms make the bound successively tighter while guaranteeing that it still remains a lower bound for all odd orders~$K$.
In the limit~$K\to\infty$, Eq.~\ref{eq:perturbative-bounds} becomes an asymptotic series to the exact marginal likelihood~$p(\bx)$.

Our main contributions are as follows:
\begin{itemize}
\item
    We revisit variational perturbation theory~\cite{paquet2009perturbation,opper2013perturbative,opper2015perturbation} based on cumulant expansions, which does not result in a bound. By introducing a generalized variational inference framework, we propose a similar yet alternative construction to cumulant expansions which results in a valid lower bound of the evidence.
\item
    We furthermore show that the new bound admits unbiased Monte Carlo gradients with low variance, making it well suited to stochastic optimization. Unbiasedness is an important property and is necessary to guarantee convergence~\cite{robbins1951stochastic}. Biased gradients may destroy the bound and may lead to divergence problems~\cite{burda2015importance,dieng2016chi,domke2018importance}.
\item
    We evaluate the proposed method experimentally both for inference and for model selection.
    Already the lowest nonstandard order $K=3$ is less prone to underestimating posterior variances than standard VI, and it fits better models in a variational autoencoder (VAE) on small data sets.
    While this reduction in bias comes at the cost of larger gradient variance, we show experimentally and explain theoretically that the gradient variance is still smaller than in previously proposed alternatives~\cite{minka2005divergence,hernandez2016black,dieng2016chi} to the standard ELBO, resulting in faster convergence.
\end{itemize}
This paper is an extended version of a proceeding conference paper by the same authors~\cite{bamler2017perturbative}, which also gave the first reference to variational inference as a form of biased importance sampling, where the tightness of the bound is traded off against a low-variance stochastic gradient.

Our paper is structured as follows. In Section~\ref{sec:cumulant_new}, we revisit the cumulant expansion for variational inference. In Section~\ref{sec:method}, we derive a new family of bounds with similar properties, but which is amenable to stochastic optimization. We then discuss theoretical properties of the bounds. Experiments are presented in Section~\ref{sec:experiments}. Finally, we discuss related work in Section~\ref{sec:related} and open questions in Section~\ref{sec:conclusion}.

\section{Background: Perturbation Theory for Variational Inference}
\label{sec:cumulant_new}
We begin by reviewing variational perturbation theory as introduced in~\cite{opper2015perturbation}. We consider a probabilistic model with data~$\bx$, latent variables~$\bz$, and joint distribution $p(\bx,\bz)$.
The goal of Bayesian inference is to find the posterior distribution,
\begin{equation}\label{eq:posterior-and-px}
	p(\bz|\bx) = \frac{p(\bx,\bz)}{p(\bx)}
    \qquad\text{where}\qquad
	p(\bx) = \int p(\bx,\bz) \,d\bz.
\end{equation}

Exact posterior inference is impossible in all but the most simple models because of the intractable integral defining the marginal likelihood~$p(\bx)$ in Eq.~\ref{eq:posterior-and-px}.
Variational perturbation theory approximates the log marginal likelihood, or evidence,~$\log p(\bx)$, via a Taylor expansion.
We introduce a so-called variational distribution~$q_\lambda(\bz)$, parameterized by variational parameters~$\lambda$, and we write the evidence as follows:
\begin{equation} \label{eq:evidence-expv}
    \log p(\bx)
    = \log \left(\E_{\bz \sim q_\lambda} \left[
            \frac{p(\bx,\bz)}{q_\lambda(\bz)}
        \right] \right)
    = \left. \log \left( \E_{\bz \sim q_\lambda} \left[
            e^{-\beta V(\bx,\bz)}
        \right] \right) \right|_{\beta=1}.
\end{equation}
Above, the `interaction energy'~$V$ is defined as
\begin{equation} \label{eq:def-v}
    V(\bx,\bz) \equiv \log q_\lambda(\bz) - \log p(\bx,\bz),
\end{equation}
and the notation~$(\cdots)|_{\beta=1}$ on the right-hand side of Eq.~\ref{eq:evidence-expv} denotes that we introduced an auxiliary real parameter~$\beta$, which we set to one at the end of the calculation.

The reason for this notation is as follows. We approximate the right-hand side of Eq.~\ref{eq:evidence-expv} by a finite order Taylor expansion in~$\beta$ before setting $\beta=1$. While $\beta$ is not a small parameter, it is a placeholder that keeps track of appearances of $V(\bx,\bz)$ which we consider to be small for all~$\bz$ in the support of~$q_\lambda$. More precisely, it is enough to demand that $V(\bx,\bz)$ is almost constant in $\bz$ with small deviations, as is the case when $q_\lambda(\bz)$ is close to the posterior~$p(\bz|\bx)$. (This will become clear in Eq.~\ref{eq:cumulant-expansion} below.) Hence, we can think of this approximation informally as an expansion in terms of the difference between the log variational distribution and the log posterior.

The first order term of the Taylor expansion is the evidence lower bound, or ELBO,
\begin{equation} \label{eq:elbo}
    \mathcal L(\lambda)
    \equiv \E_{\bz \sim q_\lambda}[-V(\bx,\bz)]
    = \E_{\bz\sim q_\lambda}\left[
            \log p(\bx,\bz) - \log q_\lambda(\bz)
        \right].
\end{equation}
As is well known in variational inference (VI)~\cite{jordan1999introduction,blei2017variational,zhang2018advances}, the ELBO is a lower bound on the evidence, i.e., $\mathcal L(\lambda) \leq \log p(\bx)$ for all~$\lambda$.
VI maximizes the ELBO over~$\lambda$, and thus tries to find the best approximation of the evidence that is possible within a first order Taylor expansion in~$V$.

Higher order terms of the Taylor expansion lead to corrections to the standard ELBO.
Suppressing the dependence of~$V$ on~$\bx$ and~$\bz$ to simplify the notation, we obtain
\begin{eqnarray} \label{eq:cumulant-expansion}
    \log p(\bx)
    \approx &\, \E_{q_\lambda} \!\left[ -V \right]
        + \frac12 \E_{q_\lambda} \!\left[ \left(V  - \E_{q_\lambda} \!\left[ V  \right]\right)^2 \right]
        - \frac{1}{3!} \E_{q_\lambda} \!\left[ \left(V  - \E_{q_\lambda} \!\left[ V  \right]\right)^3 \right] \nonumber\\
    &
        + \frac{1}{4!} \E_{q_\lambda} \!\left[ \left(V  - \E_{q_\lambda} \!\left[ V  \right]\right)^4 \right]
        - \frac{1}{2} \left(\frac12 \E_{q_\lambda} \!\left[ \left(V  - \E_{q_\lambda} \!\left[ V  \right]\right)^2 \right] \right)^{\!\!2\,} \nonumber\\
    &
        + \ldots
\end{eqnarray}

Eq.~\ref{eq:cumulant-expansion} is called the \emph{cumulant expansion} of the evidence~\cite{opper2015perturbation}.
Due to the higher order correction terms, it can potentially approximate the true evidence better than the ELBO.

Variational perturbation theory has not found its way into mainstream machine learning, which arguably has to do with two major problems of the cumulant expansion and related approaches. 
First, in contrast to the ELBO, a higher order cumulant expansion does not result in a lower bound on the evidence.
This prevents the usage of gradient-based optimization methods (as opposed to, e.g., coordinate updates), which is currently the mainstream approach to VI.
A second drawback that will be discussed below is that the cumulant expansion cannot be estimated efficiently using Monte Carlo sampling, as we discuss in Section~\ref{sec:comparison} below.
In the following sections, we will present a similar construction of an improved variational objective that does not suffer from these shortcomings, and that is compatible with black-box optimization approaches.
To this end, we have to start from a generalized formulation of VI.

\section{Perturbative Black Box Variational Inference}
\label{sec:method}

This section presents our main results.
We derive a family of new objective functions for Black Box Variational Inference (BBVI) which are amenable to stochastic optimization.
We begin by deriving a generalized ELBO for VI based on concave functions.
We then show that, in a special case, we obtain a strict bound with resemblance to variational perturbation theory.

\subsection{Variational Inference with Generalized Lower Bounds}
\label{sec:generalized-bounds}

Variational inference (VI) approximates the evidence~$\log p(\bx)$ of the model in Eq.~\ref{eq:posterior-and-px} by maximizing the ELBO~$\mathcal L(\lambda)$ (Eq.~\ref{eq:elbo}) over  variational parameters~$\lambda$. A more common approach to derive the ELBO is through Jensen's inequality~\cite{jordan1999introduction,blei2017variational,zhang2018advances}. Here, we present a similar derivation for a broader family of lower bounds.

To this end, we consider an arbitrary concave function~$f$ over the positive reals, i.e., $f:\mathbb R_{>0} \to\mathbb R$ with second derivative $f''(\xi)\leq0\;\forall\xi\in\mathbb R_{>0}$.
As an example, $f$ could be the logarithm.
We now consider the marginal likelihood $p(\bx)=\E_{\bz\sim q_\lambda}[p(\bx,\bz)/q_\lambda(\bz)]$, and we derive a lower bound on $f(p(\bx))$ using Jensen's inequality:
\begin{equation} \label{eq:f-bound}
    f(p(\bx))
    \geq \E_{\bz\sim q_\lambda} \left[
            f\left(\frac{p(\bx,\bz)}{q_\lambda(\bz)}\right)
        \right]
    =: \mathcal L_f(\lambda).
\end{equation}

As stated above, for $f(\cdot) = \log(\cdot)$ (which is a concave function), the bound in Eq.~\ref{eq:f-bound} is the standard ELBO from Eq.~\ref{eq:elbo}.
In this case, we refer to the corresponding VI scheme as KLVI, as maximizing the standard ELBO is equivalent to minimizing the Kullback-Leibler (KL) divergence from~$q_\lambda(\bz)$ to the true posterior~$p(\bz|\bx)$.

The above class of generalized bounds~$\mathcal L_f(\lambda)$ is compatible with current mainstream optimization schemes for VI, oftentimes summarized as Black Box Variational Inference (BBVI)~\cite{rezende2014stochastic,ranganath2014black}.
For the reader's convenience, we briefly review these techniques.
BBVI optimizes the ELBO via stochastic gradient descent (SGD) based on noisy estimates of its gradient.
Crudely speaking, BBVI obtains its gradient estimates in three steps: (1) by drawing Monte Carlo (MC) samples $\bz\sim q_\lambda$, (2) by averaging the bound  over these samples, and (3) by computing the gradient on the resulting average.
A slight complication arises as both the expression inside the expectation in Eq.~\ref{eq:f-bound} as well as the distribution of MC samples itself depend on~$\lambda$. The two main solutions to this problem are as follows:
\begin{itemize}
\item
    Score function gradients~\cite{ranganath2014black}, or the REINFORCE method, use the chain rule to relate the change in MC samples to a change in the log variational distribution,
    \begin{equation} \label{eq:reinforce-grad}
        \nabla_{\!\lambda} \mathcal L_f(\lambda)
        = \E_{\bz \sim q_\lambda}\left[
                (\nabla_{\!\lambda} \log q_\lambda(\bz))
                \; f\!\left(\frac{p(\bx,\bz)}{q_\lambda(\bz)}\right)
            + \nabla_{\!\lambda}
                f\!\left(\frac{p(\bx,\bz)}{q_\lambda(\bz)}\right)
            \right].
    \end{equation}
\item
    Reparameterization gradients~\cite{kingma2013auto} can be used if the samples~$\bz\sim q_\lambda$ can be expressed as some deterministic differentiable transformation $\bz = g(\boldsymbol\epsilon;\lambda)$ of auxiliary random variables~$\boldsymbol\epsilon$ drawn from some $\lambda$-independent distribution~$\tilde q(\boldsymbol\epsilon)$.
    Under these conditions, the gradient of the bound can be written as
    \begin{equation} \label{eq:reparam-grad}
        \nabla_{\!\lambda} \mathcal L_f(\lambda)
        = \E_{\boldsymbol\epsilon \sim \tilde q} \left[
                \nabla_{\!\lambda}
                f\!\left(\frac{p(\bx,g(\boldsymbol\epsilon; \lambda))}{q_\lambda(g(\boldsymbol\epsilon; \lambda))}\right)
            \right].
    \end{equation}
\end{itemize}
One obtains an unbiased estimate of~$\nabla_{\!\lambda} \mathcal L_f(\lambda)$ by approximating the expectation on the right-hand side of Eq.~\ref{eq:reinforce-grad} or~\ref{eq:reparam-grad} via MC samples~$\bz\sim q_\lambda$ or~$\boldsymbol\epsilon \sim \tilde q$, respectively.
Empirically, reparameterization gradients often have smaller variance than score function gradients.

Based on these new bounds~$\mathcal L_f(\lambda)$, we make the following observations.

\paragraph{Variational Inference and Importance Sampling.}
The generalized family of bounds, Eq.~\ref{eq:f-bound}, shows that VI and importance sampling are closely connected~\cite{bamler2017perturbative,domke2018importance}.
For $f(\cdot)$ being the identity, the variational bound becomes independent of $\lambda$ and instead becomes an unbiased importance sampling estimator of~$p(\bx)$, with the proposal distribution $q_\lambda(\bz)$ and the importance ratio $p(\bx,\bz)/q_\lambda(\bz) = e^{-V(\bx,\bz)}$.
For other choices of~$f$, the bound is no longer an unbiased estimator of~$p(\bx)$.
This way, we can identify BBVI as a form of biased importance sampling.

\paragraph{Bias Variance Trade-off.}
Importance sampling suffers from high variance in high dimensions because the log importance ratio~$V(\bx,\bz)$ (Eq.~\ref{eq:def-v}) scales approximately linearly with the size of the model.
(To see this, consider a setup where both $p$ and $q$ factorize over all~$N$ data points, in which case $V$ is proportional to $N$.)

The choice of function~$f$ in Eq.~\ref{eq:f-bound} trades off some of this variance for a bias.
KLVI uses $f(\cdot)=\log(\cdot)$, which leads to low variance gradient estimates in Eqs.~\ref{eq:reinforce-grad}-\ref{eq:reparam-grad} that depend only linearly on~$V$, rather than exponentially.
This makes the KLVI bound easy to optimize, at the cost of some bias.
An alternative to KLVI that has been explored in the literature is $\alpha$-VI~\cite{minka2005divergence, hernandez2016black,li2016renyi,dieng2016chi}, which corresponds to setting $f(\xi)=\xi^{1-\alpha}$ with some $\alpha\in(0,1)$, and thus $f(e^{-V})=e^{-(1-\alpha)V}$. This choice has an alternative bias-variance trade-off, as $V$ enters in the exponent, leading to more variance than KLVI.
Our empirical results in Figure~\ref{fig:variances}, discussed in Section~\ref{sec:GP_classification}, confirm this picture by showing that $\alpha$-VI  suffers from slow convergence due to large gradient variance.

According to our new findings, a good alternative bound should perform favorably on the bias-variance trade-off.
It should ideally depend on~$V$ only polynomially as opposed to exponentially.
Such a family of bounds will be introduced next.
We also show how they connect to variational perturbation theory (Section~\ref{sec:cumulant_new}).

\begin{figure}
    \centering
    \includegraphics[width=0.5\columnwidth]{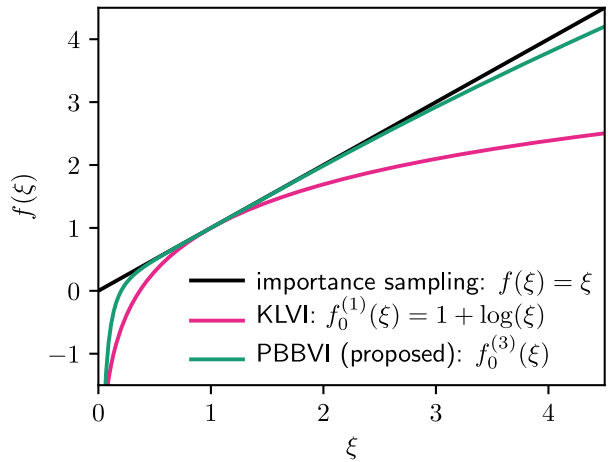}
    \caption{Different choices for the concave fucntion~$f(\xi)$ in Eq.~\ref{eq:f-bound}.
    For $f(\xi)=\xi$ (black), the bound is tight but independent of $\lambda$.
    KLVI corresponds to $f(\xi)=\log(\xi) + \text{const.}$ (pink).
    Our proposed PBBVI bound uses $f_{V_0}^{(K)}(\xi)$ (green, Eq.~\ref{eq:f-perturb}), which is tighter than KLVI for $\xi\approx e^{-V_0}$ (we set $K=3$ and $V_0=0$ for PBBVI here).}
    \label{fig:f_lower_bound}
\end{figure}
\subsection{Perturbative Lower Bounds}
\label{sec:perturbative-bounds}

We now propose a family of concave functions~$f$ for the bound in Eq.~\ref{eq:f-bound} that is inspired by ideas from perturbation theory (Section~\ref{sec:cumulant_new}).
The choice of the function~$f$ determines the tightness of the bound~$\mathcal L_f(\lambda)$.
A tight bound, i.e., a good approximation of the marginal likelihood, is important for the variational expectation-maximization (EM) algorithm, which uses the bound as a proxy for the marginal likelihood.
The bound~$\mathcal L_f(\lambda)$ becomes perfectly tight if we set~$f=\text{id}$ to the identity function, $\text{id}(\xi)=\xi$ (black line in Figure~\ref{fig:f_lower_bound}).
However, as we discussed, this is a singular limit in which the bound~$\mathcal L_{\text{id}}(\lambda) = p(\bx)$ does not depend on~$\lambda$, and therefore does not provide any gradient signal to learn an optimal variational distribution~$q_\lambda$.

Ideally, the bound should be as tight as possible if~$q_\lambda(\bz)$ is close to the posterior~$p(\bz|\bx)$.
In order to obtain a gradient signal, the bound should be lower if $q_\lambda(\bz)$ is very different from the posterior.
To achieve this, we recall that the argument~$\xi$ of the function~$f(\xi)$ in Eq.~\ref{eq:f-bound} is the fraction $p(\bx,\bz)/q_\lambda(\bz)$, which equals~$p(\bx)$ if $q_\lambda(\bz)$ is the true posterior.
Thus, the concave function~$f$ should be close to the identity function for arguments close to~$p(\bx)$.
Since the only things we know about~$p(\bx)$ are that it is positive and independent of~$\bz$, we parameterize it as~$e^{-V_0}$ with a real parameter~$V_0$, over which we optimize.

For any~$V_0\in\mathbb R$, and any odd integer $K\geq1$, we propose the following concave function:
\begin{eqnarray} \label{eq:f-perturb}
    f_{V_0}^{(K)}(\xi)
    &= e^{-V_0} \sum_{k=0}^K \frac{1}{k!} \left(V_0 + \log \xi\right)^k
    \;\overset{K\to\infty}{\longrightarrow}\;
    \text{id}(\xi)
    \qquad\text{(pointwise $\forall \xi$)}.
\end{eqnarray}

The pink and green lines in Figure~\ref{fig:f_lower_bound} plot $f_{V_0}^{(K)}(\xi)$ for $V_0=0$ and for $K=1$ and $K=3$, respectively.
The function~$f_{V_0}^{(K)}$ converges pointwise to the identity function (black line in Figure~\ref{fig:f_lower_bound}) for $K\to\infty$ because it is the $K$\textsuperscript{th} order Taylor expansion of~$\text{id}(\xi) = e^{-V_0}e^{V_0 + \log\xi}$ in $\log\xi$ around the reference point~$-V_0$.
For any finite~$K$, the Taylor expansion approximates the identity function (black line in Figure~\ref{fig:f_lower_bound}) most closely if the expansion parameter $|V_0 + \log\xi|$ is small.
We provide further intuition for Eq.~\ref{eq:f-perturb} in Section~\ref{sec:comparison} below.

An important result of this section is that the functions~$f_{V_0}^{(K)}$ define a lower bound on the marginal likelihood for any choice of $\lambda$ and $V_0$,
\begin{equation} \label{eq:chain-of-inequalities}
    p(\bx) \geq f(p(\bx)) \geq \mathcal L_{f_{V_0}^{(K)}}(\lambda)
    =: \mathcal L^{(K)}(\lambda,V_0)
    \qquad\text{(for odd $K$)}.
\end{equation}
Here, the last equality merely introduces a more convenient notation for the bound defined by Eqs.~\ref{eq:f-bound} and~\ref{eq:f-perturb}.
An explicit form for~$\mathcal L^{(K)}(\lambda,V_0)$ was given in Eq.~\ref{eq:perturbative-bounds} in the introduction.
Eq.~\ref{eq:chain-of-inequalities} follows from Eq.~\ref{eq:f-bound} and from the fact that~$f_{V_0}^{(K)}$ is concave and lies below the identity function, which we prove formally in Section~\ref{sec:proofs} below.
Since~$\mathcal L^{(K)}(\lambda,V_0)$ is a lower bound on~$p(\bx)$ for all~$V_0$, we can find the optimal reference point~$V_0$ that leads to the tightest bound by maximizing~$\mathcal L^{(K)}(\lambda,V_0)$ jointly over both~$\lambda$ and~$V_0$.
We also prove in Section~\ref{sec:proofs} that the bound on $p(\bx)$ is nontrivial, i.e., that $\mathcal L^{(K)}(\lambda,V_0)$ is positive at its maximum.
(A negative bound would be vacuous since~$p(\bx)$ is trivially larger than any negative number.)

\begin{algorithm}[t]
\KwIn{joint probability $p(\bx,\bz)$;
 order of perturbation $K$ (odd integer);
 learning rate schedule $\rho_t$;
 number of Monte Carlo samples $S$;$\qquad$
 number of training iterations $T$;
 variational family $q_\lambda(\bz)$ that admits reparameterization gradients as in Eq.~\ref{eq:reparam-grad}.}\vspace{1pt}
\KwOut{fitted variational parameters $\lambda^*$.}\vspace{2pt}
\nl Initialize $\lambda$ randomly and $V_0\gets0$\;
\nl \For{$t\gets1$ \KwTo $T$}{
\nl  Draw $S$ samples $\boldsymbol\epsilon_1,\ldots,\boldsymbol\epsilon_S \sim \tilde q$ from the noise distribution (see Eq.~\ref{eq:reparam-grad})\;\vspace{5pt}
  \textit{// Obtain gradient estimates of surrogate objective $\tilde{\mathcal L}^{(K)}(\lambda,V_0)$, see Eq.~\ref{eq:surrogate-objective}:}\\
\nl  $g_\lambda \hspace{3.9pt} \gets \nabla_{\!\lambda}\hspace{3.9pt} \!\Big[ \frac{1}{S} \sum_{s=1}^S \sum_{k=0}^{K} \frac{1}{k!}\big( V_0 + \log p(\bx,g(\boldsymbol\epsilon_s;\lambda)) - \log q_\lambda(g(\boldsymbol\epsilon_s;\lambda)) \big)^k\Big]$\label{ln:start-grad}\;
\nl  $g_{V_0} \gets \nabla_{\!V_0} \!\Big[\frac{1}{S} \sum_{s=1}^S \sum_{k=0}^{K} \frac{1}{k!} \big( V_0 + \log p(\bx,g(\boldsymbol\epsilon_s;\lambda)) - \log q_\lambda(g(\boldsymbol\epsilon_s;\lambda)) \big)^k$\Big]\;\vspace{5pt}
  \textit{// Perform update steps with rescaled gradients, see Eq.~\ref{eq:scaled-grads}:}\\
\nl  $\lambda \hspace{4.4pt}\gets \lambda + \rho_t g_\lambda$\label{ln:update-lambda}\;
\nl  $V_0 \gets V_0 + \rho_t \Big[ g_{V_0} - \frac{1}{S} \sum_{s=1}^S \sum_{k=0}^{K} \frac{1}{k!} \big( V_0 + \log p(\bx,g(\boldsymbol\epsilon_s;\lambda)) - \log q_\lambda(g(\boldsymbol\epsilon_s;\lambda)) \big)^k\Big]$\label{ln:update-v0}\label{ln:end-grad}\;
 }
 \caption{Perturbative Black Box Variational Inference (PBBVI)}
 \label{alg:pbbvi}
\end{algorithm}

To optimize the bound, we use SGD with reparameterization gradients, see Section~\ref{sec:generalized-bounds}.
Score function gradients can be used in a similar way.
Algorithm~\ref{alg:pbbvi} explains the optimization in detail.
We name the method `Perturbative Black Box Variational Inference' (PBBVI).
The algorithm implicitly scales all gradients by a factor of~$e^{V_0}$ to avoid a potential exponential explosion or decay of gradients due to the factor~$e^{-V_0}$ in the bound, Eq.~\ref{eq:perturbative-bounds}.
We calculate these rescaled gradients in a numerically stable way by considering the surrogate objective function
\begin{eqnarray} \label{eq:surrogate-objective}
    \tilde{\mathcal L}^{(K)}(\lambda,V_0)
    &\equiv e^{V_0}\mathcal L^{(K)}(\lambda,V_0) \nonumber\\
    &= \sum_{k=0}^K \frac{1}{k!} \E_{\bz\sim q_\lambda}\left[\big(
        V_0 + \log p(\bx,\bz) - \log q_\lambda(\bz)
        \big)^k \right].
\end{eqnarray}
The rescaled gradients are thus
\begin{eqnarray} \label{eq:scaled-grads}
    e^{V_0} \nabla_{\!\lambda} \mathcal L^{(K)}(\lambda,V_0)
    &= \nabla_{\!\lambda} \tilde{\mathcal L}^{(K)}(\lambda,V_0); \nonumber\\
    e^{V_0} \nabla_{\!V_0} \mathcal L^{(K)}(\lambda,V_0)
    &= \nabla_{\!V_0} \tilde{\mathcal L}^{(K)}(\lambda,V_0) - \tilde{\mathcal L}^{(K)}(\lambda,V_0).
\end{eqnarray}
Eqs.~\ref{eq:surrogate-objective}-\ref{eq:scaled-grads} allow us to estimate the rescaled gradients (see lines~\ref{ln:start-grad}-\ref{ln:end-grad} in Algorithm~\ref{alg:pbbvi}) without having to evaluate any potentially overflowing or underflowing expressions~$e^{\pm V_0}$.

\subsection{Connection to Variational Perturbation Theory}
\label{sec:comparison}

We now provide more intuition for the choice of the functions~$f_{V_0}^{(K)}$ in Eq.~\ref{eq:f-perturb} by comparing the resulting bounds, Eq.~\ref{eq:perturbative-bounds}, to the cumulant expansion, Eq.~\ref{eq:cumulant-expansion}.
We will show that the bounds enjoy similar benefits as the cumulant expansion while being valid lower bounds and providing unbiased Monte Carlo gradients.

Both the cumulant expansion and the proposed bounds~$\mathcal L^{(K)}(\lambda,V_0)$ are Taylor expansions in $V(\bx,\bz) = \log q_\lambda(\bz) - \log p(\bx,\bz)$.
The cumulant expansion in Eq.~\ref{eq:cumulant-expansion} is a Taylor expansion of the evidence~$\log p(\bx)$.
By contrast, the bound~$\mathcal L^{(K)}(\lambda,V_0)$ in Eq.~\ref{eq:perturbative-bounds} is a Taylor expansion of the marginal likelihood~$p(\bx)$, and the expansion starts from a reference point~$V_0$ over which we optimize.

Despite this difference, the two expansions are remarkably similar up to order~\hbox{$K=3$}.
Up to this order, the cumulant expansion, Eq.~\ref{eq:cumulant-expansion}, coincides with a Taylor expansion of~$p(\bx)$ in $(V-\E_{q_\lambda}[V])$.
It is thus a special case of the proposed lower bound~$\mathcal L^{(3)}(\lambda,V_0)$ if we set $V_0 = \E_{q_\lambda}[V]$. For $K>3$, the cumulant expansion contains additional terms that do not appear in the bound~$\mathcal L^{(K)}(\lambda,V_0)$, such as the last spelled out term on the right-hand side of Eq.~\ref{eq:cumulant-expansion}.
These terms contain products of expectations under~$q_\lambda$, which are difficult to estimate with Monte Carlo techniques without introducing an additional bias.
The bounds~$\mathcal L^{(K)}(\lambda,V_0)$, by contrast, depend only linearly on expectations under~$q_\lambda$, which means that they can be estimated without bias with a single sample from~$q_\lambda$.

In addition, the main advantage of $\mathcal L^{(K)}(\lambda,V_0)$ over the cumulant expansion is that $\mathcal L^{(K)}(\lambda,V_0)$ is a lower bound on the marginal likelihood for all~$\lambda$ and all~$V_0$.
This allows us to make the bound as tight as possible by maximizing it over~$\lambda$ and~$V_0$ using stochastic gradient estimates.
By contrast, variational perturbation theory with the cumulant expansion requires either optimizing towards saddle points, or optimizing a separate objective function, such as the ELBO, to find a good variational distribution~\cite{opper2015perturbation}.

The lack of a lower bound in the cumulant expansion is a particular limitation when performing model selection with the variational expectation-maximization (EM) algorithm.
Variational EM fits a model by maximizing an approximation of the marginal likelihood or the model evidence over model parameters.
Such an optimization can diverge if the approximation is not a lower bound.
This makes perturbative BBVI an interesting alternative for improved evidence approximations~\cite{burda2015importance,dieng2016chi,li2016renyi,domke2018importance,rainforth2018tighter}.

\subsection{Proofs of the Theorems}
\label{sec:proofs}

We conclude the presentation of the proposed bounds~$\mathcal L^{(K)}(\lambda,V_0)$ by providing proofs of the claims made in Section~\ref{sec:perturbative-bounds}.

\begin{theorem} \label{theorem:concave}
For all $V_0\in\mathbb R$ and all odd integers $K\geq1$, the function $f_{V_0}^{(K)}$ defined in Eq.~\ref{eq:f-perturb} satisfies the following properties:
\begin{enumerate}
\item[(i)]
    $f_{V_0}^{(K)}$ is concave; and
\item[(ii)]
    $f_{V_0}^{(K)}(\xi) \leq \xi\quad \forall \xi\in\mathbb R_{>0}$.
\end{enumerate}
As a corollary, Eq.~\ref{eq:chain-of-inequalities} holds.
\end{theorem}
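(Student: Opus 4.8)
The plan is to reduce both claims to elementary facts about the truncated exponential series via a single change of variables. Writing $u := V_0 + \log\xi$ and letting $P_K(u) := \sum_{k=0}^K u^k/k!$ denote the degree-$K$ Maclaurin polynomial of $e^u$, the definition in Eq.~\ref{eq:f-perturb} reads $f_{V_0}^{(K)}(\xi) = e^{-V_0}\,P_K(u)$, while the identity function becomes $\xi = e^{-V_0}e^{u}$. Since $\log$ is a bijection from $\mathbb R_{>0}$ onto $\mathbb R$, the variable $u$ ranges over all of $\mathbb R$, and in these coordinates the two properties take a clean form: property~(ii) is equivalent to the statement $P_K(u)\le e^u$ for all $u\in\mathbb R$, and property~(i) will follow from a direct computation of the second derivative.

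For property~(ii) I would invoke the Lagrange form of the Taylor remainder, namely $e^u - P_K(u) = e^{c}\,u^{K+1}/(K+1)!$ for some $c$ lying between $0$ and $u$. Because $K$ is odd, $K+1$ is even, so $u^{K+1}\ge 0$ for every real $u$; as $e^{c}>0$, the remainder is nonnegative, whence $P_K(u)\le e^u$ for all $u$ (with equality only at $u=0$, i.e.\ at $\xi=e^{-V_0}$, consistent with Figure~\ref{fig:f_lower_bound}). Multiplying through by $e^{-V_0}>0$ recovers $f_{V_0}^{(K)}(\xi)\le\xi$ on all of $\mathbb R_{>0}$.

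For property~(i) I would differentiate $f_{V_0}^{(K)}(\xi)=e^{-V_0}P_K(V_0+\log\xi)$ twice, using $d u/d\xi = 1/\xi$ together with $P_K' = P_{K-1}$ and the single-term difference $P_{K-1}-P_{K-2}=u^{K-1}/(K-1)!$. The low-order terms cancel, and after simplification only the top-degree monomial survives:
\[
    \bigl(f_{V_0}^{(K)}\bigr)''(\xi) = -\,\frac{e^{-V_0}}{\xi^{2}}\,\frac{u^{K-1}}{(K-1)!}.
\]
Again because $K$ is odd, $K-1$ is even, so $u^{K-1}\ge 0$; combined with $e^{-V_0}/\xi^{2}>0$ this forces the second derivative to be nonpositive, establishing concavity. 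The corollary, Eq.~\ref{eq:chain-of-inequalities}, then follows immediately: the outer inequality $p(\bx)\ge f_{V_0}^{(K)}(p(\bx))$ is property~(ii) at $\xi=p(\bx)$, while the inner inequality $f_{V_0}^{(K)}(p(\bx))\ge\mathcal L^{(K)}(\lambda,V_0)$ is Jensen's inequality (Eq.~\ref{eq:f-bound}), which applies precisely because of the concavity just shown.

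I do not expect a genuine obstacle: the entire argument hinges on the one observation that oddness of $K$ makes both $K+1$ and $K-1$ even, which is exactly what simultaneously renders the Taylor remainder nonnegative and the second derivative nonpositive. The only points needing care are the bookkeeping in the derivative computation — verifying that the intermediate polynomials really do telescope down to a single monomial — and confirming that the remainder argument holds for negative $u$ as well as positive $u$, which it does, again by the even parity of $K+1$.
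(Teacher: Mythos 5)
Your proof is correct, and part of it diverges from the paper's. For property (i) your computation coincides with the paper's essentially verbatim: both differentiate twice, observe the telescoping down to the top-degree term, arrive at $\bigl(f_{V_0}^{(K)}\bigr)''(\xi) = -\,e^{-V_0}(V_0+\log\xi)^{K-1}/\bigl((K-1)!\,\xi^{2}\bigr)$, and conclude from the even parity of $K-1$. The genuine difference is in property (ii). The paper never invokes Taylor's remainder theorem; it deduces (ii) \emph{from} (i): setting $g(\xi)=f_{V_0}^{(K)}(\xi)-\xi$, it notes that $g$ has the same second derivative as $f_{V_0}^{(K)}$ and is hence concave, checks from the first-derivative formula that ${f_{V_0}^{(K)}}'(e^{-V_0})=1$, so $\xi=e^{-V_0}$ is a stationary point of $g$ with $g(e^{-V_0})=0$, and uses the fact that a stationary point of a concave function is a global maximum. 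Your route instead identifies $f_{V_0}^{(K)}(\xi)=e^{-V_0}P_K(u)$ with $P_K$ the truncated exponential series and applies the Lagrange remainder, so (ii) follows from $u^{K+1}\ge 0$ for even $K+1$. Both arguments are valid and of comparable length; yours makes (ii) logically independent of (i) and exhibits the equality case $u=0$ explicitly, while the paper's reuses the derivative computation already needed for (i) and stays within elementary facts about concave functions. The corollary is handled identically in both: property (ii) gives the outer inequality in Eq.~\ref{eq:chain-of-inequalities}, and property (i) together with Jensen's inequality gives the inner one.
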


\begin{proof}\leavevmode
\begin{enumerate}
\item[(i)]
    The first derivative of~$f_{V_0}^{(K)}$ is
    \begin{equation} \label{eq:first-derivative}
        {f_{V_0}^{(K)}}'(\xi)
        = e^{-V_0} \sum_{k=0}^{K-1} \frac{1}{k!}
            \frac{(V_0+\log \xi)^k}{\xi}.
    \end{equation}
    For the second derivative, the two contributions from the denominator and the enumerator cancel for all but the highest order term, and we obtain
    \begin{equation}
        {f_{V_0}^{(K)}}''(\xi)
        = -\frac{e^{-V_0}}{(K-1)!} \frac{(V_0+\log \xi)^{K-1}}{\xi^2} \leq 0
    \end{equation}
    which is nonnegative everywhere for odd $K$.
    Therefore,~$f_{V_0}^{(K)}$ is concave.
\item[(ii)]
    Consider the function $g(\xi)=f_{V_0}^{(K)}(\xi) - \xi$, which is also concave since it has the same second derivative as~$f_{V_0}^{(K)}$.
    Further,~$g$ has a stationary point at $\xi = e^{-V_0}$ since ${f_{V_0}^{(K)}}'(e^{-V_0})=1$, which can be verified by inserting into Eq.~\ref{eq:first-derivative}.
    For a concave function, a stationary point is a global maximum.
    Therefore, we have
    \begin{equation}
        g(\xi) \leq g(e^{-V_0}) = 0
        \qquad \forall \xi \in \mathbb R_{>0}
    \end{equation}
    which is equivalent to the proposition $f_{V_0}^{(K)}(\xi) \leq \xi\; \forall \xi\in\mathbb R_{>0}$.
\end{enumerate}
Thus, the first inequality in Eq.~\ref{eq:chain-of-inequalities} holds by property~(ii), and the second inequality in Eq.~\ref{eq:chain-of-inequalities} follows from property~(i) and Jensen's inequality.
\end{proof}

\begin{theorem} \label{theorem:nontrivial}
The bound on the positive quantity~$p(\bx)$ is nontrivial, i.e., the maximum value of the bound, $\max_{\lambda,V_0} \mathcal L^{(K)}(\lambda,V_0)$, is positive for all odd integers~$K\geq 1$.
\end{theorem}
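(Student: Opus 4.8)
The plan is to prove the stronger, purely pointwise statement that there \emph{exists} a single pair $(\lambda,V_0)$ at which $\mathcal L^{(K)}(\lambda,V_0)>0$; since a supremum (and hence the maximum, if it is attained) dominates any attained value, this immediately gives $\max_{\lambda,V_0}\mathcal L^{(K)}(\lambda,V_0)>0$. First I would pass to the surrogate objective of Eq.~\ref{eq:surrogate-objective}. Because $\mathcal L^{(K)}(\lambda,V_0)=e^{-V_0}\,\tilde{\mathcal L}^{(K)}(\lambda,V_0)$ and the prefactor $e^{-V_0}$ is strictly positive, the bound is positive at a point if and only if $\tilde{\mathcal L}^{(K)}(\lambda,V_0)>0$ there. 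It therefore suffices to exhibit one pair $(\lambda,V_0)$ with $\tilde{\mathcal L}^{(K)}(\lambda,V_0)>0$, which conveniently sidesteps any worry about whether the maximum is actually attained.

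Next I would fix an arbitrary admissible $\lambda$ and study $\tilde{\mathcal L}^{(K)}(\lambda,V_0)$ as a function of the single real variable $V_0$. Writing $V=\log q_\lambda(\bz)-\log p(\bx,\bz)$ as in Eq.~\ref{eq:def-v} and expanding $(V_0-V)^k$ by the binomial theorem inside the expectation gives $\tilde{\mathcal L}^{(K)}(\lambda,V_0)=\sum_{k=0}^K \frac{1}{k!}\,\E_{q_\lambda}[(V_0-V)^k]$, which is a polynomial in $V_0$ of degree exactly $K$ whose coefficients are the moments $\E_{q_\lambda}[V^j]$ for $0\le j\le K$. The only summand contributing the top power $V_0^K$ is the $k=K$ term, so the leading coefficient is $1/K!>0$. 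A polynomial with positive leading coefficient diverges to $+\infty$ as $V_0\to+\infty$; hence $\tilde{\mathcal L}^{(K)}(\lambda,V_0)>0$ for all sufficiently large $V_0$, which furnishes the required point and completes the argument. As a sanity check, for $K=1$ one finds $\tilde{\mathcal L}^{(1)}=1+V_0+\mathcal L(\lambda)$ with $\mathcal L$ the ELBO of Eq.~\ref{eq:elbo}, positive once $V_0>-1-\mathcal L(\lambda)$, and optimizing over $V_0$ recovers the familiar value $e^{\mathcal L(\lambda)}$.

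The hard part is not the algebra but the single analytic hypothesis it rests on: for the expression above to be a genuine finite-degree polynomial in $V_0$, the moments $\E_{q_\lambda}[V^k]$ for $k\le K$ must be finite, i.e. $V$ must be $K$-times integrable under $q_\lambda$. I would state this as a mild regularity assumption and note that it holds in the settings of interest—e.g. for a Gaussian variational family $q_\lambda$ together with a model whose $\log p(\bx,\bz)$ grows at most polynomially in $\bz$—so that all required moments exist and the leading-term argument applies verbatim. If one wished to avoid moment assumptions entirely, an alternative is to argue directly from $f_{V_0}^{(K)}$: since $f_{V_0}^{(K)}(e^{-V_0})=e^{-V_0}>0$ and $f_{V_0}^{(K)}$ is continuous, the integrand $f_{V_0}^{(K)}(p(\bx,\bz)/q_\lambda(\bz))$ is positive wherever the importance ratio is close to $e^{-V_0}$. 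However, controlling the negative contributions from the tail where the ratio is small reduces again to the same integrability condition, so I would prefer the clean polynomial route above.
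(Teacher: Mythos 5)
Your proof is correct, but it takes a genuinely different route from the paper's. The paper argues at the (assumed) maximizer $(\lambda^*,V_0^*)$: setting the $V_0$-gradient of Eq.~\ref{eq:perturbative-bounds} to zero forces $\E_{\bz\sim q_{\lambda^*}}\!\left[\big(V_0^*+\log p(\bx,\bz)-\log q_{\lambda^*}(\bz)\big)^K\right]=0$, so the $k=K$ term drops out of the bound at the optimum and what remains is $e^{-V_0^*}\,\E_{q_{\lambda^*}}[h(u)]$ with $h(u)=\sum_{k=0}^{K-1}u^k/k!$; the paper then completes the proof by showing this even-degree polynomial is strictly positive everywhere (its global minimum value is $\tilde u^{K-1}/(K-1)!\ge 0$, which cannot vanish because $h(0)=1$). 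You instead exhibit a single point where the bound is positive: for any fixed $\lambda$, the surrogate $\tilde{\mathcal L}^{(K)}(\lambda,V_0)=e^{V_0}\mathcal L^{(K)}(\lambda,V_0)$ of Eq.~\ref{eq:surrogate-objective} is a degree-$K$ polynomial in $V_0$ whose top coefficient $1/K!$ comes only from the $k=K$ term, hence it is positive for all sufficiently large $V_0$, and any maximum dominates such a value. Your route is more elementary and in one respect more robust: it does not assume the maximum is an interior stationary point, nor does it require differentiating under the expectation sign---only finiteness of the moments $\E_{q_\lambda}[|V|^k]$ for $k\le K$, which you correctly flag and which the definition of the bound requires anyway (the paper needs the same regularity implicitly). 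What your argument does not deliver is the structural byproduct of the paper's: the stationarity identity $\E_{q_{\lambda^*}}[(V_0^*-V)^K]=0$ and the explicit positive form of the bound at the optimum, which the paper reuses elsewhere (e.g., in Section~\ref{sec:VAE} to explain why PBBVI with $K=3$ approaches KLVI on large training sets). Both arguments share the caveat that existence of the maximum is presupposed by the theorem statement rather than proved.
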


\begin{proof}
At the maximum position $(\lambda^*,V_0^*)$ of~$\mathcal L^{(K)}(\lambda,V_0)$, its gradient is zero.
Taking the gradient with respect to~$V_0$ in Eq.~\ref{eq:perturbative-bounds} and using the product rule for the prefactor~$e^{-V_0}$ and the remaining expression, we find that all terms except the contribution from $k=K$ cancel.
We thus obtain, at the maximum position~$(\lambda^*,V_0^*)$,
\begin{equation}
    \E_{\bz\sim q_{\lambda^*}}\left[\big(V_0^* + \log p(\bx,\bz) - \log q_{\lambda^*}(\bz)\big)^K\right]
    = 0.
\end{equation}
Thus, when we evaluate~$\mathcal L^{(K)}$ at~$(\lambda^*,V_0^*)$, the term with $k=K$ on the right-hand side of Eq.~\ref{eq:perturbative-bounds} evaluates to zero, and we are left with
\begin{equation} \label{eq:l-at-opt}
    \mathcal L^{(K)}(\lambda^*, V_0^*)
    = e^{-V_0^*} \E_{\bz\sim q_{\lambda^*}}\left[ h\big(V_0^* + \log p(\bx,\bz) - \log q_{\lambda^*}(\bx,\bz) \big) \right]
\end{equation}
with
\begin{equation} \label{eq:def-h}
    h( u) = \sum_{k=0}^{K-1} \frac{ u^k}{k!}
\end{equation}
where the sum runs only to $K-1$.

We now show that $h( u)$ is positive for all~$u$ and all odd~$K$.
If $K=1$, then $h( u)=1$ is trivially positive.
For $K\geq3$, $h( u)$ is a polynomial in $ u$ of even order $K-1$, whose highest order term has a positive coefficient $\frac{1}{(K-1)!}$.
Therefore, $h(u)$ goes to positive infinity for both $u\to\infty$ and $u\to-\infty$.
It thus has a global minimum at some value $\tilde u \in\mathbb R$.
At the global minimum, its derivative is zero, i.e.,
\begin{equation} \label{eq:derivative-h}
    0 = h'(\tilde  u) = \sum_{k=0}^{K-2} \frac{\tilde u^k}{k!}.
\end{equation}
Subtracting Eq.~\ref{eq:derivative-h} from Eq.~\ref{eq:def-h}, we find that the value of $h$ at its global minimum~$\tilde u$ is
\begin{equation} \label{eq:htildeu}
    h(\tilde u) = \frac{\tilde u^{K-1}}{(K-1)!} \geq 0
\end{equation}
which is nonnegative because $K-1$ is even.
Further, $h(\tilde u)$ is not zero since this would imply~$\tilde u = 0$ by Eq.~\ref{eq:htildeu}, but $h(0)=1$ by Eq.~\ref{eq:def-h}.
Therefore, $h(\tilde u)$ is strictly positive, and since $\tilde u$ is a global minimum of $h$, we have $h(u) > 0$ for all $u\in\mathbb R$.
Inserting into Eq.~\ref{eq:l-at-opt} concludes the proof that the lower bound at the optimum is positive.
\end{proof}

\section{Experiments}
\label{sec:experiments}
We evaluate PBBVI with different models.
We begin with a qualitative comparison on a simple one-dimensional toy problem (Section~\ref{sec:multimodal-1d}).
We then investigate the behavior of BBVI in a controlled setup of Gaussian processes on synthetic data (Section~\ref{sec:synthetic}).
Next, we evaluate PBBVI based on a classification task using Gaussian processes classifiers, where we use data from the UCI machine learning repository (Section~\ref{sec:GP_classification}).
This is a Bayesian non-conjugate setup where black box inference is required. Finally, we use an experiment with the variational autoencoder (VAE) to explore our approach on a deep generative model (Section~\ref{sec:VAE}). This experiment is carried out on  MNIST data.
We use the perturbative order $K=3$ for all experiments with PBBVI.
This corresponds to the lowest order beyond standard VI with the KL divergence (KLVI), since $K$ has to be an odd integer, and PBBVI with $K=1$ is equivalent to KLVI.
Across all the experiments, PBBVI demonstrates advantages based on different metrics.

\begin{figure}
    \centering
    \includegraphics[width=0.5\textwidth]{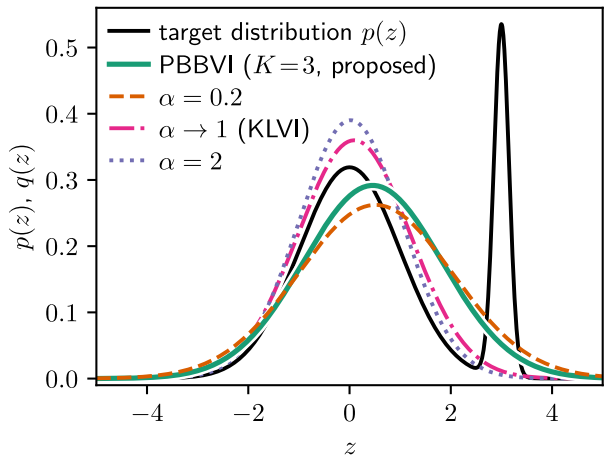}
    \caption{Behavior of different VI methods on fitting a univariate Gaussian to a bimodal target distribution (black). PBBVI (proposed, green) covers more of the mass of the entire distribution than the traditional KLVI (pink). $\alpha$-VI is mode seeking for large~$\alpha$ (blue) and mass covering for smaller~$\alpha$ (orange).}
    \label{fig:1DGaussian}
\end{figure}

\subsection{Mass Covering Effect}
\label{sec:multimodal-1d}

In Figure~\ref{fig:1DGaussian}, we fit a Gaussian distribution to a one-dimensional bimodal target distribution (black line), using different divergences.
Compared to BBVI with the standard KL divergence (KLVI, pink line), $\alpha$-divergences~\cite{minka2005divergence} are more mode-seeking (purple line) for large values of $\alpha$, and more mass-covering (orange line) for small $\alpha$~\cite{li2016renyi} (the limit $\alpha\to 1$ recovers KLVI~\cite{minka2005divergence}).
Our PBBVI bound ($K=3$, green line) achieves a similar mass-covering effect as in $\alpha$-divergences, but with associated low-variance reparameterization gradients.
This is seen in Figure~\ref{fig:variances}~(b), discussed in Section~\ref{sec:GP_classification} below, which compares the gradient variances of $\alpha$-VI and PBBVI as a function of latent dimensions.

\subsection{GP Regression on Synthetic Data}
\label{sec:synthetic}

\begin{figure}[t]
\centering
\subfigure[KLVI]{\includegraphics[width=0.42 \textwidth]{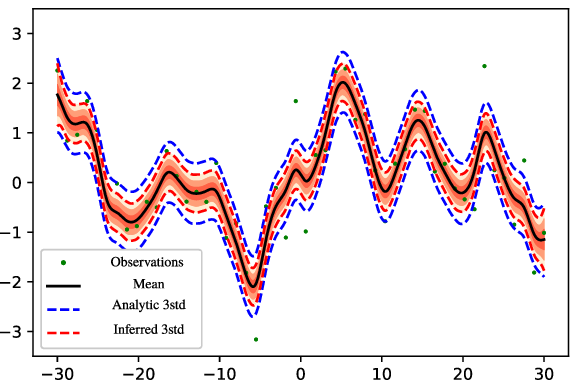}}
\hspace{15pt}
\subfigure[PBBVI with $K=3$]{\includegraphics[width=0.42 \textwidth]{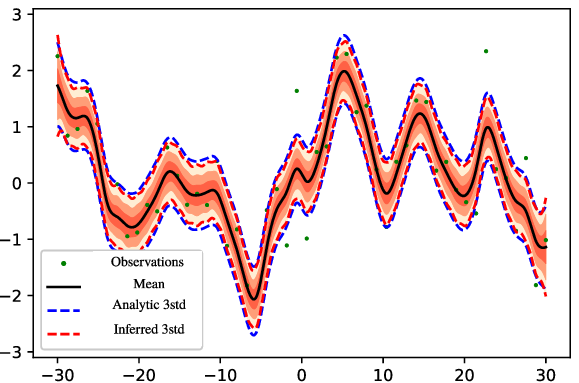}}
\caption{Gaussian process regression on synthetic data (green dots). Three standard deviations are shown in varying shades of orange.
The blue dashed lines show three standard deviations of the true posterior. The red dashed lines show the inferred three standard deviations using KLVI~(a) and PBBVI~(b). We see that the results from our proposed PBBVI are close to the analytic solution while traditional KLVI underestimates the variances.}
\label{fig:Toy_GP_fig}
\vspace{4pt}
\end{figure}

\begin{table}[t]
\subfigure[GP regression]{
    \begin{tabular}{l c }
     \toprule
     Method &Avg variances\\
     \cmidrule(lr){1-2}
     Analytic & 0.0415 \\
     KLVI & 0.0176 \\
     PBBVI &  0.0355 \\
     \bottomrule
    \end{tabular}}
    \hspace{5pt}
    \subfigure[GP classification]{
    \begin{tabular}{ l  c c c c}
     \toprule
    Data set &Crab &Pima & Heart & Sonar   \\
     \cmidrule(lr){1-5}
    KLVI &0.22& 0.245   & 0.148 & 0.212 \\
    PBBVI & \textbf{0.11} & \textbf{0.240}  & \textbf{0.133} & \textbf{0.173} \\
     \bottomrule
    \end{tabular}}
    \caption{Results for Gaussian process (GP) experiments.
    (a)~Average marginal posterior variances at the positions of data points for GP regression with synthetic data (see Figure~\ref{fig:Toy_GP_fig}).
    The proposed PBBVI is closer to the analytic solution than standard KLVI.
    (b)~Error rate of GP classification on the test set. The lower the better. Our proposed PBBVI consistently obtains better classification results. }
    \label{tab:gp}
\end{table}

In this section, we inspect the inference behavior using a synthetic data set with Gaussian processes (GP).  We generate the data according to a Gaussian noise distribution centered around a mixture of sinusoids, and we sample 50 data points (green dots in Figure~\ref{fig:Toy_GP_fig}).
We then use a GP to model the data, thus assuming the generative process $f \sim \mathcal{GP} (0, \Lambda)$ and $y_i \sim \mathcal{N}(f_i, \epsilon)$.

In this experiment, the model admits an analytic solution (three standard deviations shown in blue dashed lines in Figure~\ref{fig:Toy_GP_fig}).
We compare the analytic solution to approximate posteriors using a fully factorized Gaussian variational distribution obtained by KLVI (Figure~\ref{fig:Toy_GP_fig}~(a)) and by the proposed PBBVI (Figure~\ref{fig:Toy_GP_fig}~(b)).
The results from PBBVI are almost identical to the analytic solution.
In contrast, KLVI underestimates the posterior variance. This is consistent with Table \ref{tab:gp}~(a), which shows the average marginal variances.
PBBVI results are much closer to the analytic results.

\subsection{Gaussian Process Classification}
\label{sec:GP_classification}

We evaluate the performance of PBBVI and KLVI on a GP classification task. Since the model is non-conjugate, no analytical baseline is available in this case. We model the data with the following generative process:
\begin{equation} \label{eq:gp-class-model}
    f \sim \mathcal{GP}(0, \Lambda),
    \qquad z_i = \sigma(f_i),
    \qquad y_i \sim Bern(z_i).
\end{equation}
Above, $\Lambda$ is the GP kernel, $\sigma$ indicates the sigmoid function, and $Bern$ indicates the Bernoulli distribution. We furthermore use the Matern-32 kernel,
\begin{equation} \label{eq:gp-class-kernel}
\Lambda_{ij} = s^2 \Bigg(1 + \frac{\sqrt{3}\, r_{ij}}{l}\Bigg) \exp\left( - \textstyle \frac{\sqrt{3}\, r_{ij}}{l}\right),
    \qquad
r_{ij}={||x_i-x_j||}_2 \; .
\end{equation}

\paragraph{Data.}
We use four data sets from the UCI machine learning repository, suitable for binary classification: Crab (200 datapoints), Pima (768 datapoints), Heart (270 datapoints), and Sonar (208 datapoints).
We randomly split each of the data sets into two halves. One half is used for training and the other half is used for testing. We set the hyper parameters $s = 1$ and $l = \sqrt{D}/2$ throughout all experiments, where $D$ is the dimensionality of the input $x$.

Table~\ref{tab:gp}~(b) shows the classification performance (error rate) for these data sets. Our proposed PBBVI consistently performs better than the traditional KLVI.

\begin{figure}
    \centering
    \subfigure[Speed of convergence]{\includegraphics[height=1.6in]{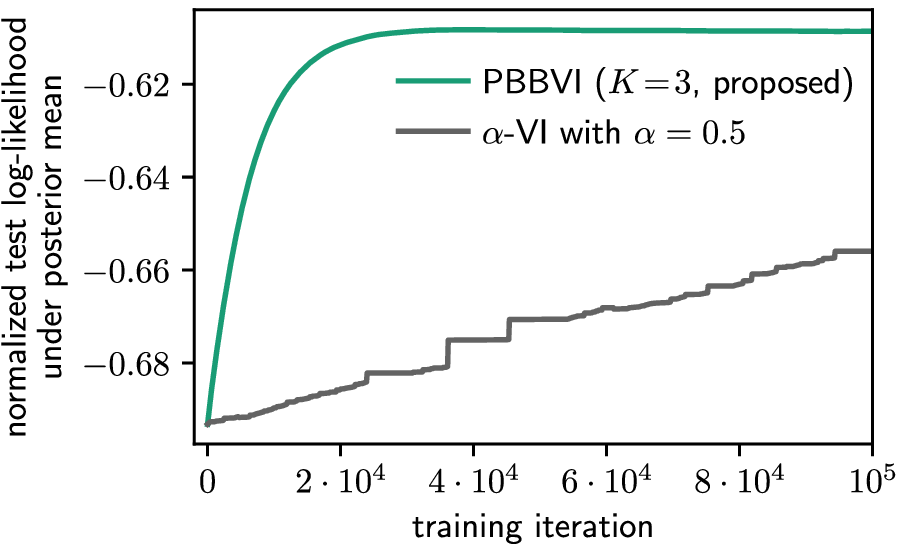}}
    \hspace{15pt}
    \subfigure[Gradient variance]{\includegraphics[height=1.6in]{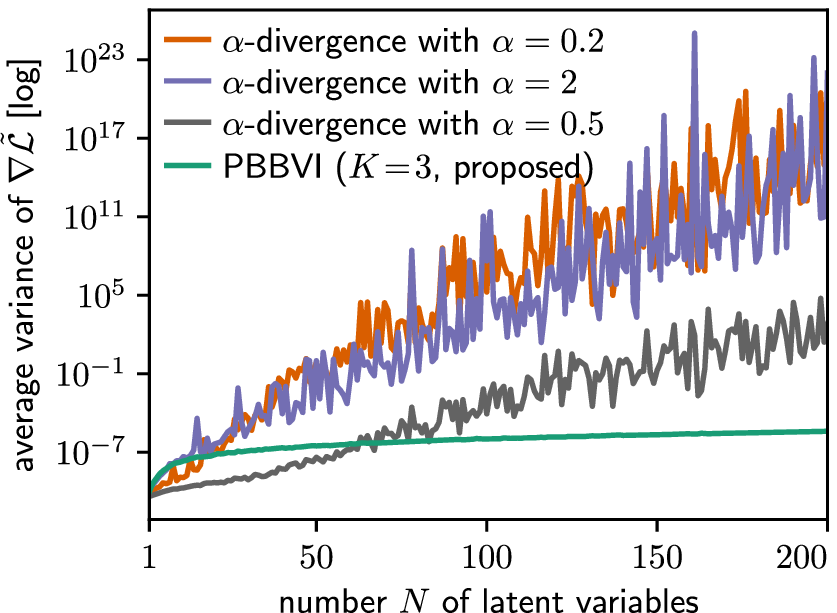}}
    \caption{Comparisons between PBBVI (proposed) and $\alpha$-VI.
    (a)~Training curves (test log-likelihood per data point) for GP classification on the Sonar data set. PBBVI converges faster than $\alpha$-VI even though we tuned the number of MC samples per training step ($100$) and the constant learning rate ($10^{-5}$) so as to maximize the performance of $\alpha$-VI on a validation set.
    (b)~Sampling variance of the stochastic gradient (averaged over its components) in the optimum of a GP regression model with synthetic data, for $\alpha$-divergences (orange, purple, gray), and the proposed PBBVI (green). The variance grows exponentially with the latent dimension $N$ for $\alpha$-VI, and only algebraically for PBBVI.}
    \label{fig:variances}
\end{figure}

\paragraph{Convergence speed comparison.}
We also carry out a comparison in terms of speed of convergence, focusing on PBBVI and $\alpha$-VI.
Our results indicate that the smaller variance of the reparameterization gradient in PBBVI leads to faster convergence of the optimization algorithm.

We train the GP classifier from Eqs.~\ref{eq:gp-class-model}-\ref{eq:gp-class-kernel} on the Sonar UCI data set using a constant learning rate.
Figure \ref{fig:variances}~(a) shows the test log-likelihood under the posterior mean as a function of training iterations.
We split the data set into equally sized training, validation, and test sets.
We then tune the learning rate and the number of Monte Carlo samples per gradient step to obtain optimal performance on the validation set after optimizing the $\alpha$-VI bound with a fixed budget of random samples.
We use $\alpha=0.5$ here; smaller values of $\alpha$ (i.e., stronger mass-covering effect) leads to even slower convergence.
We then optimize the PBBVI lower bound with the same learning rate and number of Monte Carlo samples.
The final test error rate is $22\%$ (the data set has binary labels and is approximately balanced).
Although the hyperparameters are tuned for $\alpha$-VI, the proposed PBBVI converges an order of magnitude faster (Figure \ref{fig:variances}~(a)).

Figure~\ref{fig:variances}~(b) provides more insight in the scaling of the gradient variance.
Here, we fit GP regression models on synthetically generated data by maximizing the PBBVI lower bound and the $\alpha$-VI lower bound with $\alpha\in\{0.2, 0.5, 2\}$.
We generate a separate synthetic data set for each $N\in\{1,\ldots,200\}$ by drawing $N$ random data points around a sinusoidal curve.
For each $N$, we fit a one-dimensional GP regression with PBBVI and $\alpha$-VI, respectively, using the same data set for both methods.
The variational distribution is a fully factorized Gaussian with $N$ latent variables.
After convergence, we estimate the sampling variance of the gradient of each lower bound with respect to the posterior mean.
We calculate the empirical variance of the gradient based on $10^5$ samples from $q_\lambda$, and we average over the $N$ coordinates.
Figure~\ref{fig:variances} shows the average sampling variance as a function of $N$ on a logarithmic scale.
The variance of the gradient of the $\alpha$-VI bound grows exponentially in the number of latent variables.
By contrast, we find only algebraic growth for PBBVI.

\subsection{Variational Autoencoder}
\label{sec:VAE}

\begin{figure}
    \centering
    \includegraphics[width=0.5\textwidth]{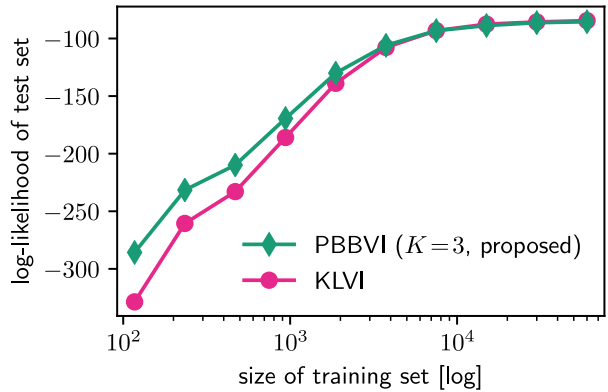}
    \caption{%
        Predictive likelihood of a VAE trained on data sets of different sizes.
        The higher value the better.
        The training data are randomly sampled subsets of the MNIST training set.
        Our proposed PBBVI method outperforms KLVI mainly when the size of the training data set is small.
        The fewer the training data, the more advantage PBBVI obtains.%
    }
    \label{fig:MNIST}
\end{figure}

We experiment on Variational Autoencoders (VAEs), and we compare the PBBVI and the KLVI bound in terms of predictive likelihoods on held-out data~\cite{kingma2013auto}.
Autoencoders compress unlabeled training data into low-dimensional representations by fitting it to an encoder-decoder model that maps the data to itself.
These models are prone to learning the identity function when the hyperparameters are not carefully tuned, or when the network is too expressive, especially for a moderately sized training set.
VAEs are designed to partially avoid this problem by estimating the uncertainty that is associated with each data point in the latent space.
It is therefore important that the inference method does not underestimate posterior variances.
We show that, for small data sets, training a VAE by maximizing the PBBVI lower bound leads to higher predictive likelihoods than maximizing the KLVI lower bound.

We train the VAE on the MNIST data set of handwritten digits~\cite{lecun1998gradient}.
We build on the publicly available implementation by~\cite{burda2015importance} and we also use the same architecture and hyperparamters, with $L=2$ stochastic layers and $S=5$ samples from the variational distribution per gradient step.
The model has $100$ latent units in the first stochastic layer and $50$ latent units in the second stochastic layer.

The VAE model factorizes over all data points.
We train it by stochastically maximizing the sum of the PBBVI lower bounds for all data points using a minibatch size of $20$.
The VAE amortizes the gradient signal across data points by training inference networks.
The inference networks express the mean and variance of the variational distribution as a function of the data point.
We add an additional inference network that learns the mapping from a data point to the reference energy $V_0$.
Here, we use a network with four fully connected hidden layers of $200$, $200$, $100$, and $50$ units, respectively.

MNIST contains $60{,}000$ training images.
To test our approach on smaller-scale data where Bayesian uncertainty matters more, we evaluate the test likelihood after training the model on randomly sampled fractions of the training set.
We use the same training schedules as in the publicly available implementation, keeping the total number of training iterations independent of the size of the training set.
Different to the original implementation, we shuffle the training set before each training epoch as this turns out to increase the performance for both our method and the baseline.

Figure \ref{fig:MNIST} shows the predictive log-likelihood of the whole test set, where the VAE is trained on random subsets of different sizes of the training set.
We use the same subset to train with PBBVI and KLVI for each training set size.
PBBVI leads to a higher predictive likelihood than traditional KLVI on subsets of the data. We explain this finding with our observation that the variational distributions obtained from PBBVI capture more of the posterior variance.
As the size of the training set grows---and the posterior uncertainty decreases---the performance of KLVI catches up with PBBVI.

As a potential explanation why PBBVI converges to the KLVI result for large training sets, we note that $\E_{q_{\lambda^*}}[(V_0^*-V)^3]=0$ at the optimal variational distribution $q_{\lambda^*}$ and reference energy $V_0^*$ (see Section~\ref{sec:proofs}).
If $V$ becomes a symmetric random variable (such as a Gaussian) in the limit of a large training set, then this implies that $\E_{q_{\lambda^*}}[V]=V_0^*$, and PBBVI with~$K=3$ reduces to KLVI for large training sets.

\section{Related work}
\label{sec:related}
Our approach is related to BBVI, VI with generalized divergences, and variational perturbation theory. We thus briefly discuss related work in these three directions.

\paragraph{Black box variational inference (BBVI).}
BBVI has already been addressed in the introduction~\cite{salimans2013fixed,kingma2013auto,rezende2014stochastic,ranganath2014black, ruiz2016generalized}; it enables variational inference for many models~\cite{ranganath2015deep,bamler2017dynamic,bamler2018improving,deng2017factorized,yingzhen2018disentangled,mescheder2017adversarial,ma2018eddi}. Recent developments include variance reduction and improvements in reparameterization and amortization~\cite{kingma2015variational,tucker2017rebar,rezende2015variational,mandt2014smoothed,ruiz2016generalized,buchholz2018quasi,marino2018iterative} which are all compatible with our approach.  Our work builds upon BBVI in that BBVI makes a large class of new divergence measures between the posterior and the approximating distribution tractable. Depending on the divergence measure, BBVI may suffer from high-variance stochastic gradients. This is a practical problem that we address in this paper.

\paragraph{Generalized divergences measures.}
Our work connects to generalized information-theoretic divergences~\cite{shun2012differential}. \cite{minka2005divergence} introduced a broad class of divergences for variational inference, including $\alpha$-divergences. Most of these divergences have been intractable in large-scale applications until the advent of BBVI.  In this context, $\alpha$-divergences were first suggested by ~\cite{hernandez2016black} for local divergence minimization, and later for global minimization by~\cite{li2016renyi} and~\cite{dieng2016chi}. As we show in this paper, $\alpha$-divergences have the disadvantage of inducing high-variance gradients, since the ratio between posterior and variational distribution enters the bound polynomially instead of logarithmically. In contrast, our approach leads to a more stable inference scheme in high dimensions.

\paragraph{Variational perturbation theory.}
Perturbation theory refers to methods that aim to truncate a typically divergent power series to a finite series.
In machine learning, these approaches have been addressed from an information-theoretic perspective by~\cite{tanaka1999theory,tanaka2000information}.
Thouless-Anderson-Palmer (TAP) equations \cite{thouless1977solution} are a form of second-order perturbation theory.
They were originally developed in statistical physics to include perturbative corrections to the mean-field solution of Ising models.
They have been adopted into Bayesian inference in~\cite{plefka1982convergence} and were advanced by many authors~\cite{kappen2001second,paquet2009perturbation,opper2013perturbative,opper2015expectation}.
In variational inference, perturbation theory yields extra terms to the mean-field variational objective which are difficult to calculate analytically.
This may be a reason why the methods discussed are not widely adopted by practitioners.
In this paper, we emphasize the ease of including perturbative corrections in a black box variational inference framework.
Furthermore, in contrast to earlier formulations, our approach yields a strict lower bound to the marginal likelihood which can be conveniently optimized.
Our approach is different from the traditional variational perturbation formulation~\cite{kleinert2009path,schwartz2008ideas}, which generally does not result in a bound.

\section{Conclusion}
\label{sec:conclusion}
We first presented a view on  black box variational inference as a form of biased importance sampling, where we can trade off bias versus variance by the choice of divergence.
Bias refers to the deviation of the bound from the true marginal likelihood, and variance refers to its reparameterization gradient estimator.
We then proposed a family of new variational bounds that connect to variational perturbation theory, and which include corrections to the standard Kullback-Leibler bound.
Our proposed PBBVI bound is an asymptotic series to the true marginal likelihood for large order $K$ of the perturbative expansion, and we showed both theoretically and experimentally that it has lower-variance reparameterization gradients compared to $\alpha$-VI.
In order to scale up our method to massive data sets, future work will explore stochastic versions of PBBVI.
Since the PBBVI bound contains interaction terms between all data points, breaking it up into mini-batches is not straightforward.
Besides, while our experiments used a fixed perturbative order of $K=3$, it could be beneficial to increase the perturbative order at some point during the training cycle once an empirical estimate of the gradient variance drops below a certain threshold.
It would also be interesting to investigate a~$K$-independent formulation of PBBVI using Russian roulette estimates~\cite{lyne2015russian}.
Furthermore, the PBBVI and $\alpha$-bounds can also be combined, such that PBBVI further approximates $\alpha$-VI.
This could lead to promising results on large data sets where traditional $\alpha$-VI is hard to optimize due to its variance, and traditional PBBVI converges to KLVI.
As a final remark, a tighter variational bound is not guaranteed to always result in a better posterior approximation since the variational family limits the quality of the solution.
However, in the context of variational EM, where one performs gradient-based hyperparameter optimization on the log marginal likelihood, our bound gives more reliable results since higher orders of $K$ can be assumed to approximate the marginal likelihood better.

\section*{References}
\bibliographystyle{plain}
\bibliography{ref}

\begin{thebibliography}{10}

\bibitem{shun2012differential}
Shunichi Amari.
\newblock {\em Differential-geometrical methods in statistics}, volume~28.
\newblock Springer Science \& Business Media, 2012.

\bibitem{bamler2017dynamic}
Robert Bamler and Stephan Mandt.
\newblock Dynamic word embeddings.
\newblock In {\em International Conference on Machine Learning}, pages
  380--389, 2017.

\bibitem{bamler2018improving}
Robert Bamler and Stephan Mandt.
\newblock Improving optimization in models with continuous symmetry breaking.
\newblock In {\em International Conference on Machine Learning}, pages
  432--441, 2018.

\bibitem{bamler2017perturbative}
Robert Bamler, Cheng Zhang, Manfred Opper, and Stephan Mandt.
\newblock Perturbative black box variational inference.
\newblock In {\em Advances in Neural Information Processing Systems}, pages
  5079--5088, 2017.

\bibitem{blei2017variational}
David~M Blei, Alp Kucukelbir, and Jon~D McAuliffe.
\newblock Variational inference: A review for statisticians.
\newblock {\em Journal of the American Statistical Association},
  112(518):859--877, 2017.

\bibitem{buchholz2018quasi}
Alexander Buchholz, Florian Wenzel, and Stephan Mandt.
\newblock Quasi-monte carlo variational inference.
\newblock In {\em International Conference on Machine Learning}, pages
  667--676, 2018.

\bibitem{burda2015importance}
Yuri Burda, Roger Grosse, and Ruslan Salakhutdinov.
\newblock Importance weighted autoencoders.
\newblock In {\em International Conference on Learning Representations}, pages
  1--9, 2016.

\bibitem{dempster1977maximum}
Arthur~P Dempster, Nan~M Laird, and Donald~B Rubin.
\newblock Maximum likelihood from incomplete data via the {EM} algorithm.
\newblock {\em Journal of the Royal Statistical Society. Series B
  (methodological)}, pages 1--38, 1977.

\bibitem{deng2017factorized}
Zhiwei Deng, Rajitha Navarathna, Peter Carr, Stephan Mandt, Yisong Yue, Iain
  Matthews, and Greg Mori.
\newblock Factorized variational autoencoders for modeling audience reactions
  to movies.
\newblock In {\em Proceedings of the IEEE Conference on Computer Vision and
  Pattern Recognition}, pages 2577--2586, 2017.

\bibitem{dieng2016chi}
Adji~B Dieng, Dustin Tran, Rajesh Ranganath, John Paisley, and David~M Blei.
\newblock Variational inference via $\chi$ upper bound minimization.
\newblock In {\em Advances in Neural Information Processing Systems}, pages
  2732--2741, 2017.

\bibitem{domke2018importance}
Justin Domke and Daniel~R Sheldon.
\newblock Importance weighting and variational inference.
\newblock In {\em Advances in Neural Information Processing Systems}, pages
  4470--4479, 2018.

\bibitem{hernandez2016black}
Jose Hernandez-Lobato, Yingzhen Li, Mark Rowland, Thang Bui, Daniel
  Hern{\'a}ndez-Lobato, and Richard Turner.
\newblock Black-box alpha divergence minimization.
\newblock In {\em International Conference on Machine Learning}, pages
  2256--2273, 2016.

\bibitem{hoffman2013stochastic}
Matthew~D Hoffman, David~M Blei, Chong Wang, and John~William Paisley.
\newblock Stochastic variational inference.
\newblock {\em Journal of Machine Learning Research}, 14(1):1303--1347, 2013.

\bibitem{jordan1999introduction}
Michael~I Jordan, Zoubin Ghahramani, Tommi~S Jaakkola, and Lawrence~K Saul.
\newblock An introduction to variational methods for graphical models.
\newblock {\em Machine learning}, 37(2):183--233, 1999.

\bibitem{kappen2001second}
Hilbert~J Kappen and Wim Wiegerinck.
\newblock Second order approximations for probability models.
\newblock pages 238--244, 2001.

\bibitem{kingma2013auto}
Diederik~P Kingma and Max Welling.
\newblock Auto-encoding variational {Bayes}.
\newblock In {\em International Conference on Learning Representations}, pages
  1--9, 2014.

\bibitem{kingma2015variational}
Durk~P Kingma, Tim Salimans, and Max Welling.
\newblock Variational dropout and the local reparameterization trick.
\newblock In {\em Advances in Neural Information Processing Systems}, pages
  2575--2583, 2015.

\bibitem{kleinert2009path}
Hagen Kleinert.
\newblock {\em Path integrals in quantum mechanics, statistics, polymer
  physics, and financial markets}.
\newblock World scientific, 2009.

\bibitem{lecun1998gradient}
Yann LeCun, L{\'e}on Bottou, Yoshua Bengio, and Patrick Haffner.
\newblock Gradient-based learning applied to document recognition.
\newblock volume~86, pages 2278--2324, 1998.

\bibitem{yingzhen2018disentangled}
Yingzhen Li and Stephan Mandt.
\newblock Disentangled sequential autoencoder.
\newblock In {\em International Conference on Machine Learning}, pages
  5656--5665, 2018.

\bibitem{li2016renyi}
Yingzhen Li and Richard~E Turner.
\newblock R{\'e}nyi divergence variational inference.
\newblock In {\em Advances in Neural Information Processing Systems}, pages
  1073--1081, 2016.

\bibitem{lyne2015russian}
Anne-Marie Lyne, Mark Girolami, Yves Atchad{\'e}, Heiko Strathmann, Daniel
  Simpson, et~al.
\newblock On russian roulette estimates for bayesian inference with
  doubly-intractable likelihoods.
\newblock {\em Statistical science}, 30(4):443--467, 2015.

\bibitem{ma2018eddi}
Chao Ma, Sebastian Tschiatschek, Konstantina Palla, Jose Miguel~Hernandez
  Lobato, Sebastian Nowozin, and Cheng Zhang.
\newblock {EDDI:} efficient dynamic discovery of high-value information with
  partial {VAE}.
\newblock In {\em International Conference on Machine Learning}, pages 1--8,
  2019.

\bibitem{mandt2014smoothed}
Stephan Mandt and David Blei.
\newblock Smoothed gradients for stochastic variational inference.
\newblock In {\em Advances in Neural Information Processing Systems}, pages
  2438--2446, 2014.

\bibitem{marino2018iterative}
Joseph Marino, Yisong Yue, and Stephan Mandt.
\newblock Iterative amortized inference.
\newblock In {\em International Conference on Machine Learning}, pages
  3400--3409, 2018.

\bibitem{mescheder2017adversarial}
Lars Mescheder, Sebastian Nowozin, and Andreas Geiger.
\newblock Adversarial variational bayes: Unifying variational autoencoders and
  generative adversarial networks.
\newblock In {\em International Conference on Machine Learning}, pages
  2391--2400, 2017.

\bibitem{minka2005divergence}
Thomas Minka.
\newblock Divergence measures and message passing.
\newblock Technical report, Technical report, Microsoft Research, 2005.

\bibitem{opper2015expectation}
Manfred Opper.
\newblock Expectation propagation.
\newblock In {\em Statistical {Physics}, {Optimization}, {Inference}, and
  {Message}-{Passing} {Algorithms}}, chapter~9, pages 263--292. 2015.

\bibitem{opper2015perturbation}
Manfred Opper, Marco Fraccaro, Ulrich Paquet, Alex Susemihl, and Ole Winther.
\newblock Perturbation theory for variational inference.
\newblock In {\em Advances in Neural Information Processing Systems Workshop on
  Advances in Approximate Bayesian Inference}, 2015.

\bibitem{opper2013perturbative}
Manfred Opper, Ulrich Paquet, and Ole Winther.
\newblock Perturbative corrections for approximate inference in gaussian latent
  variable models.
\newblock {\em Journal of Machine Learning Research}, 14(1):2857--2898, 2013.

\bibitem{paquet2009perturbation}
Ulrich Paquet, Ole Winther, and Manfred Opper.
\newblock Perturbation corrections in approximate inference: Mixture modelling
  applications.
\newblock {\em Journal of Machine Learning Research}, 10(Jun):1263--1304, 2009.

\bibitem{plefka1982convergence}
Timm Plefka.
\newblock Convergence condition of the {TAP} equation for the infinite-ranged
  ising spin glass model.
\newblock {\em Journal of Physics A: Mathematical and general}, 15(6):1971,
  1982.

\bibitem{rainforth2018tighter}
Tom Rainforth, Adam Kosiorek, Tuan~Anh Le, Chris Maddison, Maximilian Igl,
  Frank Wood, and Yee~Whye Teh.
\newblock Tighter variational bounds are not necessarily better.
\newblock In {\em International Conference on Machine Learning}, pages
  4277--4285, 2018.

\bibitem{ranganath2014black}
Rajesh Ranganath, Sean Gerrish, and David~M Blei.
\newblock Black box variational inference.
\newblock In {\em International Conference on Artificial Intelligence and
  Statistics}, pages 814--822, 2014.

\bibitem{ranganath2015deep}
Rajesh Ranganath, Linpeng Tang, Laurent Charlin, and David Blei.
\newblock Deep exponential families.
\newblock In {\em Artificial Intelligence and Statistics}, pages 762--771,
  2015.

\bibitem{rezende2015variational}
Danilo~Jimenez Rezende and Shakir Mohamed.
\newblock Variational inference with normalizing flows.
\newblock In {\em International Conference on International Conference on
  Machine Learning}, pages 1530--1538, 2015.

\bibitem{rezende2014stochastic}
Danilo~Jimenez Rezende, Shakir Mohamed, and Daan Wierstra.
\newblock Stochastic backpropagation and approximate inference in deep
  generative models.
\newblock In {\em International Conference on Machine Learning}, pages
  1278--1286, 2014.

\bibitem{robbins1951stochastic}
Herbert Robbins and Sutton Monro.
\newblock A stochastic approximation method.
\newblock {\em The annals of mathematical statistics}, 1951.

\bibitem{ruiz2016generalized}
Francisco Ruiz, Michaelis Titsias, and David Blei.
\newblock The generalized reparameterization gradient.
\newblock In {\em Advances in neural information processing systems}, pages
  460--468, 2016.

\bibitem{salimans2013fixed}
Tim Salimans and David~A Knowles.
\newblock Fixed-form variational posterior approximation through stochastic
  linear regression.
\newblock {\em Bayesian Analysis}, 8(4):837--882, 2013.

\bibitem{schwartz2008ideas}
Moshe Schwartz and Eytan Katzav.
\newblock The ideas behind self-consistent expansion.
\newblock {\em Journal of Statistical Mechanics: Theory and Experiment},
  2008(04):23, 2008.

\bibitem{tanaka1999theory}
Toshiyuki Tanaka.
\newblock A theory of mean field approximation.
\newblock In {\em Advances in Neural Information Processing Systems}, pages
  1--8, 1999.

\bibitem{tanaka2000information}
Toshiyuki Tanaka.
\newblock Information geometry of mean-field approximation.
\newblock {\em Neural Computation}, 12(8):1951--1968, 2000.

\bibitem{thouless1977solution}
David~J Thouless, Philip~W Anderson, and Robert~G Palmer.
\newblock Solution of `solvable model of a spin glass'.
\newblock {\em Philosophical Magazine}, 35(3):593--601, 1977.

\bibitem{tucker2017rebar}
George Tucker, Andriy Mnih, Chris~J Maddison, John Lawson, and Jascha
  Sohl-Dickstein.
\newblock Rebar: Low-variance, unbiased gradient estimates for discrete latent
  variable models.
\newblock In {\em Advances in Neural Information Processing Systems}, pages
  2627--2636, 2017.

\bibitem{zhang2018advances}
Cheng Zhang, Judith Butepage, Hedvig Kjellstrom, and Stephan Mandt.
\newblock Advances in variational inference.
\newblock {\em IEEE transactions on pattern analysis and machine intelligence},
  pages 1--20, 2018.

\end{thebibliography}

\end{document}